\renewcommand{\phi}{\varphi}
\newcommand{\median}{\operatorname{\textbf{M}}}
\def\MAE{\textbf{MAE}}
\let\R\Real
\newtheorem{theorem}{Theorem}
\newtheorem{lemma}{Lemma}
\newaliascnt{corollary}{theorem}
\newaliascnt{conjecture}{theorem}
\newaliascnt{proposition}{theorem}
\newtheorem{problem}{Problem}
\newcommand{\comment}[1]{\iffalse #1 \fi}
\title{Binary Split Categorical feature with Mean Absolute Error Criteria in CART}
\author{
    Peng Yu\textsuperscript{\rm 1,2,4},
    Yike Chen\textsuperscript{\rm 1},
    Chao Xu\textsuperscript{\rm 1}\thanks{Corresponding author.},
    Albert Bifet\textsuperscript{\rm 2},
    Jesse Read\textsuperscript{\rm 3}
}
\begin{document}

\maketitle

\begin{abstract}

In the context of the Classification and Regression Trees (CART) algorithm, the efficient splitting of categorical features using standard criteria like GINI and Entropy is well-established. However, using the Mean Absolute Error (MAE) criterion for categorical features has traditionally relied on various numerical encoding methods. This paper demonstrates that unsupervised numerical encoding methods are not viable for the MAE criteria. Furthermore, we present a novel and efficient splitting algorithm that addresses the challenges of handling categorical features with the MAE criterion. Our findings underscore the limitations of existing approaches and offer a promising solution to enhance the handling of categorical data in CART algorithms.

\end{abstract}

\section{Introduction}

The CART family of algorithms (random forest, gradient boosting tree) is well-known for its top performance on tabular data. Real-world tabular data often contains not only numerical but also categorical features. The CART algorithm recursively partitions the input dataset with a binary split optimization step and terminates when reaching a minimum number of instances. While traditional machine learning models only work with numerical data, the CART family of algorithms can process categorical features directly. This flexibility is because the binary split optimization step in the CART algorithm only requires feature data types that allow for different subsets.

The binary split step is recognized as a major bottleneck regarding the computational efficiency of tree learning algorithms \cite{CATLETT1991596}. Specifically, when processing categorical features, the associated discrete set topology can result in an exponential search space for binary splits. As a result, various numerical encoding methods have been developed to address this limitation. Consequently, many popular tree-based machine learning software packages (such as XGBoost \cite{chen2016xgboost}, LightGBM \cite{NIPS2017_6907}, and Catboost \cite{prokhorenkova2018catboost}) only support numerical data or include automatic numerical encoding methods for categorical data. On the other hand, only a subset of splitting criteria (such as mean squared error and Gini impurity) have optimally guaranteed numerical encodings for categorical data \cite{hastie01statisticallearning}. The splitting criterion MAE (Mean Absolute Error) lacks a proven optimal numerical encoding. MAE is more robust when dealing with outliers and skewed distributions and is widely adopted in various statistical domains. The most successful and practical numerical encoding method for this criterion is a median-based heuristic numerical encoding. This heuristic has been implemented in \texttt{scikit-learn}, and takes $O(n^2)$ time, where $n$ is the dataset size \cite{githubissue}. Unfortunately, $O(n^2)$ running time is too slow for practical purposes. The community suggests using subsampling to avoid the running time issue at the cost of finding a worse split, which is the standard in \texttt{LightGBM}. 

\paragraph{Our Contributions}

We are motivated by two open problems: whether there is a numerical encoding that works for MAE, and if not, does there exist a fast exact algorithm for binary split through MAE?
\begin{enumerate}
    \item We prove that no unsupervised numerical encoding method is optimal for MAE, and show a median heuristic could be twice the optimal. While the proof itself is relatively straightforward, the significance of this result is reflected in the substantial effort invested in seeking the optimal numerical encoding method. For instance, dozens of unsupervised numerical encoding methods are under development \cite{willmcginnis}. 
    \item We develop an \emph{exact} and completely new algorithm to solve the binary split of categorical features with the MAE criterion in $O(n \log n + k\log k \log n)$ time without using numerical encoding, where $n$ is the number of data points and $k$ is the number of categories. The new algorithm is faster than the current heuristics, and also gives the exact result. So it both handles real-world size data without subsampling and is completely optimal. The new algorithm may also hold independent interest, as it solves the \emph{unimodal cost $2$-median problem},
    which generalizes various problems studied in computational geometry literature.  
\end{enumerate}

\section{Preliminaries}

In regression tree learning, during a node split computation, the goal is to find a binary partition $S, S^c$ of $Y \in \mathbb{R}^n$, based on some feature $X$. Here, $S^c$ is the complement of $S$ with respect to $Y$. When $X$ is categorical, the goal can be further simplified as finding a binary partition of $\mathcal{Y} = \{ Y_1, Y_2, \ldots Y_k \}$, where $Y_i \subseteq \mathbb{R}$ is the subset of the target data points with category $i$ in feature $X$. We assume the collection of data points in $\mathcal{Y}$ is a set, and they are all disjoint. This is only for the benefit of exposition. Everything would still hold with proper definition if repeated data points are allowed. One easy way is to assume that if there are copies of the same element, we just perturbed each one of them by an infinitesimal amount, so all data points are unique.

Depending on the splitting criteria, the partition must maximize or minimize an objective function. For a given set $\mathcal{S} \subseteq \mathcal{Y}$, the MAE is defined as 
\begin{equation}
     \MAE(\mathcal{S})  = \min_{a\in \R} \sum_{x \in \cup \mathcal{S}} | x - a |.
\end{equation}
The $a$ achieving the minimum is $\textbf{M}(\cup \mathcal{S})$, where $\textbf{M}(\cdot)$ is the median of the input. Define the error of a split $\mathcal{S}$ and $\mathcal{S}^c$ as $\lambda(\mathcal{S})$, which is the sum of their MAE. 

\begin{equation}
\label{eq:o}
  \lambda(\mathcal{S}) :=  \MAE(\mathcal{S})+\MAE(\mathcal{S}^c).
\end{equation}

When using MAE as the splitting criterion, the objective value would be the minimum over all splits: 
\begin{equation}
\label{eq:o}
  \lambda :=  \min_{\mathcal{S} \subset  \mathcal{Y}, \mathcal{S} \neq \emptyset } \lambda(\mathcal{S}).
\end{equation}


The problem of computing the split $S, S^c$ that achieves the minimum $\lambda(S)$ is referred to as the \emph{MAE split problem}.

To solve for the minimum, one can enumerate all subsets of $\mathcal{Y}$, resulting in an undesirable $O(2^k)$ search space. On the other hand, the community has developed numerous heuristic numerical encoding methods. 

\paragraph{Unsupervised Numerical Encoding}

    \begin{figure}[h]
    \centering
        \includegraphics[width=\linewidth]{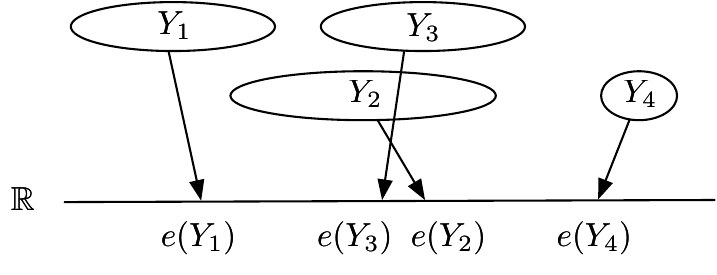}
        \caption{A example where $\mathcal{Y}=\{Y_1,Y_2,Y_3,Y_4\}$, and the encoding function $e$ maps elements in $\mathcal{Y}$ to $\mathbb{R}$. There are $5$ downward closed sets, and $3$ of them splits, so $D(\mathcal{Y},e) = \{\{Y_1\},\{Y_1,Y_3\},\{Y_1,Y_3,Y_2\}\}$.}
        \label{fig:encoding}
    \end{figure}

Rather than enumerating all subsets, one can employ a numeric encoding/set function $e: 2^{\mathbb{R}} \to \mathbb{R}$ to establish an ordering $\preceq_{e}$ and select sets based on this ordering. 

A numerical encoding function is considered unsupervised if specified independently of the data, meaning it is determined without observing the input.

The numerical encoding provides a natural ordering over the elements of $\mathcal{Y}$, where $A \preceq_{e} B$ if $e(A) \leq e(B)$. We define the \emph{downward closed sets} of $\mathcal{Y}$ as $D'(\mathcal{Y},e)$, which consists of sets of the form $\{A \mid e(A) \leq x, A \in \mathcal{Y}\}$ for some $x$. 

Numerical encoding has been used as a heuristic to identify the optimal partition within the downward closed sets. Specifically, let $D(\mathcal{Y},e)=D'(\mathcal{Y},e)\setminus \{\mathcal{Y}, \emptyset\}$, which are downward closed sets that splits $\mathcal{Y}$ into two non-empty sets. See \Cref{fig:encoding} for example. Our goal is to find $\mathcal{S} \in D(\mathcal{Y},e)$ that minimizes the objective $\lambda$. This modified problem leads to a more favorable $O(k)$ search space, although optimality is not guaranteed. 

For instance, the median heuristic employs the encoding function $e=\textbf{M}$ by arranging sets based on their medians. It then enumerates the downward closed sets and their complements as potential splits. Consequently, this approach yields only $k-1$ potential splits, which can be tested one by one and return the optimum.


    
    \paragraph{Piecewise-linear functions}
    We also introduce a few useful functions and their properties. 
    A function $f:\R\to \R$ is \emph{unimodal}, if there exists a $c$ such that for any $x\leq y\leq c$, we have $f(x)\geq f(y)$, and for $c\leq x\leq y$, $f(x)\leq f(y)$. 
    
    A function $h:\R\times \R \to \R$ is \emph{totally monotone}, if for any $x_1\leq x_2\leq y_1\leq y_2$, $h(x_1,y_1)\geq h(x_1,y_2)$ then $h(x_2,y_1)\geq h(x_2,y_2)$. A positive linear combination of totally monotone functions is totally monotone. A matrix is totally monotone if the function $h(i,j) = M_{i,j}$ is totally monotone. The main property of a totally monotone matrix is the index for row minimum is non-decreasing. That is, if $M_{i,a_i}$ is the minimum value of the $i$th row, then we have $a_1\leq a_2 \leq \ldots \leq a_n$ \cite{Park99}.
    
    We use $B(f)$ to denote the set of breakpoints for a piecewise-linear function. 
    Recall the median function $\median(S)$ is an element $y\in S$, such that $\sum_{x\in S} |x-y|$ is minimized.


\begin{table}[h!]
\centering
\caption{Comparison of unsupervised encoding for different criteria.}
\label{tab:encoding}
\begin{tabular}{l l c}
\hline
\textbf{Splitting Criteria} & \textbf{Numerical Encoding} & \textbf{Optimal} \\
\hline
MSE             & Mean                    & $\checkmark$ \\
Gini impurity   & Single class percentage  & $\checkmark$ \\
MAE             & One-hot                 & $\times$ \\
MAE             & Median                  & $\times$ \\
\hline
\end{tabular}
\end{table}

    \section{Numerical encoding and median heuristic}
    
    Is there a numerical encoding that can be used to find the optimal binary split for MAE? 
    Specifically, is there an encoding function $e$ such that the following equality holds: $\lambda$ is equal to $\min_{\mathcal{S}\in D(\mathcal{Y},e)} \lambda(S)$?

    \Cref{tab:encoding} shows target mean-based numerical encoding for categorical features has been proven optimal in decision tree regression with mean squared error (MSE) in \cite{breiman1984classification}, the same heuristic does not work with MAE. 

    Still, this does not rule out the existence of other unsupervised numerical encodings that work for MAE. Unfortunately, we prove that such encoding cannot exist.

    Suppose a unique optimal partition of a dataset minimizes the MAE. In that case, any encoding that works for MAE must have the encoding of all elements in one partition strictly smaller than or greater than the encoding of the other partition. Formally, let $\{\mathcal{A},\mathcal{B}\}$ forms the unique optimum partition of dataset $\mathcal{Y}$, and $e$ is a encoding that \emph{works} for MAE, then either $e(A)<e(B)$ for all $A\in \mathcal{A}$ and $B\in \mathcal{B}$, or $e(A)>e(B)$ for all $A\in \mathcal{A}$ and $B\in \mathcal{B}$. If the encoding $e$ works for MAE, then the optimal partition is in $D(\mathcal{Y}, e)$.
    
    \begin{theorem}
    No numerical encoding function works for binary split with MAE splitting criteria.
    \end{theorem}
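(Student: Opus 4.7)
The plan is to argue by contradiction. Suppose some unsupervised $e\colon 2^{\R}\to\R$ works on every dataset with a unique MAE optimum; I would then exhibit three small datasets whose unique optima cannot all be consistent with any linear order on their encodings. Specifically, I would fix a common core of three categories $A,B,C$ and choose three auxiliary categories $D_1,D_2,D_3$ so that each four-category dataset $\mathcal{Y}_i=\{A,B,C,D_i\}$ has a \emph{unique} optimal MAE split in which a designated member of $\{A,B,C\}$ is isolated on one side: $\mathcal{Y}_1$ optimal at $\{A\}\mid\{B,C,D_1\}$, $\mathcal{Y}_2$ at $\{B\}\mid\{A,C,D_2\}$, and $\mathcal{Y}_3$ at $\{C\}\mid\{A,B,D_3\}$.

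By the observation stated in the paragraph preceding the theorem, each such unique optimum forces the encoding value of the isolated category to be strictly extremal within the four-element set of encodings in its dataset, and hence also within the three-element subset $\{e(A),e(B),e(C)\}$. Thus each of $e(A), e(B), e(C)$ is required to be strictly extremal in $\{e(A),e(B),e(C)\}$. But any total order on three real numbers has exactly one middle element, which by definition is not extremal, so these three requirements cannot be simultaneously satisfied; this is the contradiction.

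The hard part will be the construction itself, because MAE is notoriously prone to tied optima: nested interval structures and symmetric configurations systematically produce multiple co-optimal splits in direct computation. The recipe I would follow is to cluster the three non-isolated categories tightly around a common median while placing the isolated one far from that median, and to use deliberately asymmetric numerical values—with small perturbations if necessary—to break residual ties and guarantee strict uniqueness. For certain choices of core $A,B,C$, a pure-singleton setup cannot make the value-wise middle of $\{A,B,C\}$ uniquely isolated by MAE, since the unique optimum of a three-singleton dataset always isolates a geometrically extreme element; in that case the core would have to include multi-element categories whose spans allow them to be MAE-isolated despite having a central median.
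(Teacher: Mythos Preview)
Your contradiction schema is clean and genuinely different from the paper's: you aim for a three-way ``each of $e(A),e(B),e(C)$ must be strictly extremal among the three'' impossibility, whereas the paper builds a four-step cycle of strict inequalities $e(A_1)<e(A_4)<\cdots<e(A_1)$ using eight distinct category sets spread over four datasets with no three sets in common. Your route, if it worked, would be pleasingly economical.

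The gap is that you never exhibit $A,B,C,D_1,D_2,D_3$, and your recipe does not obviously succeed. The tension is structural: the three core categories $A,B,C$ are \emph{fixed} across all three datasets, yet each must be the uniquely isolable one in one dataset and part of the tight cluster in the other two. For singleton cores you already note the obstruction (the value-wise middle one can never be uniquely isolated); for multi-element cores the situation is better but still delicate, and a direct attempt quickly runs into ties. For instance, if $A,B,C$ are near-identical two-point sets, all $1$-vs-$2$ splits of $\{A,B,C\}$ have equal MAE, and adding a $D_i$ at the centroid of the non-isolated pair leaves $\{D_i\}\mid\{A,B,C\}$ tied with the target split. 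Your description ``cluster the three non-isolated categories tightly \ldots place the isolated one far'' presupposes freedom you do not have, since only $D_i$ varies. You would need to show concretely that some $A,B,C$ admit, for \emph{each} member, an auxiliary $D$ making that member the unique MAE-isolated side; this is the whole content of the proof and it is not supplied.

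The paper sidesteps this difficulty entirely: by allowing all four categories to change from dataset to dataset, each of its four instances can be engineered independently with explicit three-point sets $A_i=\{a_i-\epsilon,a_i,a_i+\epsilon\}$ and $A_i'=\{a_i-\epsilon,a_i+\epsilon,a_j\}$ at $a_1=0,\,a_2=2,\,a_3=3,\,a_4=5$, after which the unique optima are checked directly and the inequalities chain into a contradiction. If you want to salvage your scheme, you must produce explicit $A,B,C,D_1,D_2,D_3$ and verify all seven bipartitions in each of the three datasets; absent that, the proof is incomplete.
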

    \begin{proof}
    Assume such an encoding $e$ exists and prove by contradiction via constructing a counter-example. Let $\epsilon>0$ be some small and fixed value, say $0.01$. 
    
    Let $a_1=0,a_2=2,a_3=3,a_4=5$. We define $A_i = \{a_i-\epsilon, a_i, a_i+\epsilon\}$, $A_i'=\{ a_i-\epsilon, a_i+\epsilon, a_1\}$ if $i\in\{3,4\}$, otherwise $A_i' = \{a_i-\epsilon, a_i+\epsilon, a_4\}$.
    \begin{enumerate}
    \item The unique optimum partition of $\{ A_1, A_1', A_4, A_4' \}$ is $\{ A_1, A_1' \}, \{A_4, A_4' \}$, without loss of generality, let $e(A_1)<e(A_4)$.
    \item The unique optimum partition of $\{ A_2, A_1', A_3, A_4' \}$ is $\{ A_2, A_1' \}, \{A_3, A_4' \}$, hence $e(A_1') < e(A_4')$.
    \item The unique optimum partition of $\{ A_2, A_2', A_3, A_3' \}$ is $\{ A_2, A_2' \}, \{A_3, A_3' \}$, hence $e(A_2) < e(A_3)$.
    \item The unique optimum partition of $\{ A_1, A_2', A_3', A_4 \}$ is $\{ A_1, A_3' \}, \{A_2',A_4\}$, hence $e(A_4)  < e(A_1)$, a contradiction.
    \end{enumerate}
    \end{proof}
    
    Now we know that any encoding-based heuristic would not give the correct result, but maybe it can still give a good enough result. To answer this, we examine the limitations of the most widely used encoding-based heuristic, the median heuristic.
    
    Empirically, it has been shown that the median numerical encoding works most of the time for MAE splitting criteria \cite{torgo1999inductive}. The conclusion is made based on experiments on limited datasets. The median encoding result in sub-optimal splits was only observed through some rare, randomly generated datasets.  
    
    However, we design an input so that the median heuristic is almost twice as bad as the optimum. Let $n$ be an even integer. Consider 4 collections of data points $Y_0,Y_1,Y_2,Y_3$. $Y_0,Y_1$ consists of $n$ copies of $0$ and $1$, respectively. Let $Y_2$ consist of $n/2$ copies of $0$, $n/2+1$ copies of $0.5+\epsilon$, and $Y_3$ consists of $n/2$ copies of $1$ and $n/2+1$ copies of $0.5-\epsilon$. Using the median heuristic, the potential points to split are $0, 0.5-\epsilon, 0.5+\epsilon, 1$. Observe that no matter which one is chosen, the output of the median heuristic would give a solution of value $n+2\epsilon$. The actual optimal would give a value around $n/2+1$. 

    \section{Methodology}
    Knowing that no unsupervised numerical encoding $e$ works for MAE, there might still be efficient algorithms that give the exact solution and don't use any encoding. In the following section, we propose such an algorithm. The algorithm is a fairly complicated divide-and-conquer that uses tools in computational geometry.

 We first transform the MAE split problem into a more manageable version, where instead of optimization of subsets (which can be as large as $2^k$), it becomes optimizing over $O(n^2)$ points. 
   \begin{lemma}\label{prop:transform-equivalence}
Let $\mathcal{Y}$ be a family of disjoint sets. Then
\[
\lambda \;=\; \min_{\emptyset \subsetneq \mathcal{S}\subsetneq \mathcal{Y}} \bigl(\MAE(\mathcal{S}) \;+\; \MAE(\mathcal{S}^c)\bigr)
\]
can be equivalently written as
\[
\lambda \;=\; \min_{a,b \,\in\, \mathbb{R}}
\;\sum_{S \,\in\, \mathcal{Y}} \min\Bigl(\sum_{i \,\in\, S} \lvert i - a\rvert,\;\sum_{j \,\in\, S} \lvert j - b\rvert\Bigr).
\]
\end{lemma}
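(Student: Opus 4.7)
The plan is to prove both inequalities of $\lambda = \mu$ separately, where $\mu$ denotes the right-hand side. Each direction is built around choosing the natural witness on one side and collapsing it to a feasible object on the other side; one of the two collapses has a small edge case that is the only real obstacle.

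For $\lambda \geq \mu$, I would start from an optimal split $(\mathcal{S}^*, \mathcal{S}^{*c})$ on the left and take $a = \median(\cup \mathcal{S}^*)$ and $b = \median(\cup \mathcal{S}^{*c})$, which by definition of MAE make $\MAE(\mathcal{S}^*) = \sum_{S \in \mathcal{S}^*}\sum_{i \in S}|i-a|$ and similarly for $\mathcal{S}^{*c}$. For each $S \in \mathcal{Y}$, its contribution on its own side of the split is trivially at least $\min(\sum_{i\in S}|i-a|, \sum_{j\in S}|j-b|)$. Summing over $\mathcal{Y}$ shows that this particular $(a,b)$ is feasible for $\mu$ with objective value at most $\lambda$.

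For $\mu \geq \lambda$, I would take an optimal pair $(a^*, b^*)$ and induce a partition $\mathcal{S}^+ := \{S \in \mathcal{Y} : \sum_{i \in S}|i-a^*| \leq \sum_{j \in S}|j-b^*|\}$, $\mathcal{S}^- := \mathcal{Y} \setminus \mathcal{S}^+$, so that the inner minima are attained by picking $a^*$ on $\mathcal{S}^+$ and $b^*$ on $\mathcal{S}^-$. When both $\mathcal{S}^+$ and $\mathcal{S}^-$ are non-empty, the right-hand objective rewrites as $\sum_{x \in \cup \mathcal{S}^+}|x-a^*| + \sum_{y \in \cup \mathcal{S}^-}|y-b^*|$, and replacing $a^*, b^*$ by the respective medians only decreases each summand, giving $\mu \geq \MAE(\mathcal{S}^+)+\MAE(\mathcal{S}^-) \geq \lambda$.

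The main obstacle is the degenerate case where the induced $\mathcal{S}^+$ equals $\mathcal{Y}$ (or is empty), so no valid split is produced by the construction. I would handle this by observing that the right-hand objective then collapses to $\sum_{x \in \cup \mathcal{Y}}|x-a^*| \geq \MAE(\mathcal{Y})$, and that for \emph{any} nontrivial split $(\mathcal{T}, \mathcal{T}^c)$ the common witness $m = \median(\cup \mathcal{Y})$ yields $\MAE(\mathcal{T}) + \MAE(\mathcal{T}^c) \leq \sum_{z \in \cup \mathcal{Y}}|z-m| = \MAE(\mathcal{Y})$, so $\lambda \leq \MAE(\mathcal{Y})$. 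Chaining gives $\mu \geq \MAE(\mathcal{Y}) \geq \lambda$ in this case as well, completing the proof.
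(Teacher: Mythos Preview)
Your argument is correct and follows essentially the same idea as the paper's proof: both directions amount to reorganizing the double sum set by set and observing that the inner $\min$ corresponds to assigning each $S$ to one side of a split. Your write-up is in fact more careful than the paper's, since you explicitly treat the degenerate case where the induced partition $(\mathcal{S}^+,\mathcal{S}^-)$ is trivial, which the paper's proof glosses over; your handling of that case via $\lambda \leq \MAE(\mathcal{Y}) \leq \mu$ is clean and correct.
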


\begin{proof}
By definition, 
\[
\MAE(\mathcal{S}) \;=\; \min_{a \,\in\, \mathbb{R}}\,\sum_{i \in \bigcup \mathcal{S}} |i - a|.
\]
Hence, for any subset $\mathcal{S} \subseteq \mathcal{Y}$ with complement $\mathcal{S}^c$, we have
\[
\MAE(\mathcal{S}) + \MAE(\mathcal{S}^c)
\;=\;
\min_{a \in \mathbb{R}} \sum_{i \in \bigcup \mathcal{S}} \lvert i - a\rvert
\;+\;
\min_{b \in \mathbb{R}} \sum_{j \in \bigcup \mathcal{S}^c} \lvert j - b\rvert.
\]
Note that $\bigcup \mathcal{S}$ and $\bigcup \mathcal{S}^c$ partition all elements in $\bigcup \mathcal{Y}$. 
We may reorganize the sums \emph{set by set} (i.e., over each $S \in \mathcal{Y}$), introducing two real parameters $a$ and $b$. 
Gathering the terms for each $S$ and observing that the choice 
\[
\min\Bigl(\sum_{i \in S}|i-a|,\;\sum_{j \in S}|j-b|\Bigr)
\]
corresponds to placing $S$ into $\mathcal{S}$ or its complement $\mathcal{S}^c$, respectively, yields
\[
\begin{aligned}
  \min_{\emptyset \subsetneq \mathcal{S} \subsetneq \mathcal{Y}} & \bigl(\MAE(\mathcal{S})+\MAE(\mathcal{S}^c)\bigr) \\
  &= \min_{a,b \,\in\, \mathbb{R}} \sum_{S \,\in\, \mathcal{Y}} 
    \min\Bigl(\sum_{i \in S} |i - a|,\;\sum_{j \in S} |j - b|\Bigr).
\end{aligned}
\]

This confirms the claimed equivalence, completing the proof.
\end{proof}


    After the transformation, we consider the following optimization problem.

    \begin{problem}[Median split problem]\label{prob:ms}
        Given $\mathcal{Y}$, a family of subsets of $\R$, find $a,b\in \mathbb{R}$, such that $\sum_{S \in \mathcal{Y}} \min(\sum_{i \in S} | i-a|, \sum_{j \in S} | j-b|)$ is minimized.
    \end{problem}

    We define a few more substitutions to simplify (and at the same time, generalize) the problem even further.
    
    Define $f_S(x) = \sum_{y \in S} |y - x|$.
    Observe that we try to optimize $g(a,b) = \sum_{S\in \mathcal{Y}} \min \{ f_S(a), f_S(b)\}$. Note that each $f_S$ is piecewise-linear and unimodal. Indeed, let $c = \median(S)$, when $x <c$, $f_S$ is monotonically decreasing and when $x >c$, $f_S$ is monotonically increasing. Hence, $f_S$ is unimodal. 
    We use this property to design a much faster algorithm. To this end, we introduce a much more general problem, the Unimodal Cost 2-Median problem (UC2M).
    
    \begin{problem}[Unimodal Cost 2-Median (UC2M)]\label{prob:uni}
        Let $f_1,\ldots,f_k:\R\to \R$ be $k$ piecewise-linear unimodal functions with a total of $n$ breakpoints. Let $g(a,b) = \sum_{i=1}^k \min\{f_i(a),f_i(b)\}$. Find $a,b\in \R$ such that it minimizes $g$.
    \end{problem}

    \begin{theorem}
    \Cref{prob:ms} reduces to \Cref{prob:uni} in linear time.
    \end{theorem}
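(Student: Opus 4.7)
The plan is to exhibit an explicit reduction where, for each set $S \in \mathcal{Y}$, we construct a single function $f_S$ that exactly encodes $S$'s contribution to the Median split objective, and then verify three items: (i) the $f_S$'s satisfy the hypotheses of UC2M, (ii) the objective values agree on the nose, and (iii) the construction runs in linear time.

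First I would define $f_S(x) := \sum_{y\in S}|y-x|$ for every $S\in\mathcal{Y}$. Since $|y-x|$ is convex and piecewise-linear with a single breakpoint at $x=y$, the sum $f_S$ is convex (in particular unimodal, with the minimum attained at $\median(S)$) and piecewise-linear with breakpoints exactly at the elements of $S$. The slope sequence is explicit: if $|S|=m$ and the elements are listed in increasing order, the slope starts at $-m$, increases by $2$ at each breakpoint, and ends at $+m$. Because the sets in $\mathcal{Y}$ are disjoint, the total number of breakpoints is $\sum_{S\in\mathcal{Y}}|S|=n$, matching the breakpoint count used in \Cref{prob:uni}.

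Next I would match the objectives term by term. By construction,
\[
\sum_{S\in\mathcal{Y}}\min\!\Bigl(\sum_{i\in S}|i-a|,\sum_{j\in S}|j-b|\Bigr)
\;=\;\sum_{S\in\mathcal{Y}}\min\{f_S(a),f_S(b)\}\;=\;g(a,b),
\]
so the minima over $(a,b)\in\R^2$ coincide and any optimal pair for UC2M is optimal for \Cref{prob:ms}, and conversely.

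For the running-time claim, the input to \Cref{prob:ms} already lists the $k$ sets $S\in\mathcal{Y}$ together with their $n$ total elements; this list is itself a valid description of the $k$ functions $f_S$ (their breakpoint locations, sizes, and slope rule are all read off directly in $O(n+k)$ time), so no further preprocessing is required. The only real subtlety, and the spot where care is needed, is the representation handed to the UC2M solver: one must confirm that the solver can ingest the $f_S$'s from this raw breakpoint list without requiring a sorted or slope-annotated form that would cost an extra $\Theta(n\log n)$ sort. I would therefore phrase the reduction so that any sorting or normalization the solver needs is charged to the algorithm for \Cref{prob:uni} rather than to the reduction, which keeps the transformation genuinely linear.
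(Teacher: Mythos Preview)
Your proposal is correct and follows exactly the same approach as the paper: define $f_S(x)=\sum_{y\in S}|y-x|$ and pass the family $\{f_S\}_{S\in\mathcal{Y}}$ to the UC2M solver. The paper's own proof is a two-line sketch that only names the $f_S$'s, whereas you additionally verify unimodality, the breakpoint count, the objective match, and the linear-time claim; your extra care about where the sorting cost is charged is a valid point that the paper silently handles by absorbing it into the UC2M algorithm.
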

    \begin{proof}
        Let $f_S = \sum_{i \in S} |i - x|$.
        Let the input to the \Cref{prob:ms} be $\mathcal{Y} = \{S_1,\ldots,S_k\}$.  This reduces to \Cref{prob:uni} with input functions $f_{S_1}, \ldots, f_{S_k}$.
    \end{proof}

    See \Cref{alg:reduction} for the entire reduction assuming inputs are data points of pairs $(r,c)$, where $r$ is the value and $c$ is the category of the feature. Note for simplicity of presentation, we only compute the value instead of the actual split, but it is easy to extend it to return the entire solution.
    
    \begin{figure}[h]
        \centering
        \begin{algorithm}
            \textsc{BinaryMAESplit}($data$):\+
            \\  $C \gets$ list of categories of the feature
            \\  for $c\in C$\+
            \\    $G\gets \emptyset$
            \\    for $(r,c) \in data$\+
            \\        add function $x\mapsto |r-x|$ to $G$\-
            \\    $f_c \gets \sum_{g\in G} g$\-
            \\  return \textsc{Unimodal2Median}($\{f_c | c\in C\}$)\-
        \end{algorithm}
        \caption{Find the optimum value of the MAE criteria.}
        \label{alg:reduction}
    \end{figure}
    
    Therefore, we shift gears and try to solve \Cref{prob:uni} in the remainder of the paper. 
    
    \section{Algorithm for Unimodal Cost $2$-Median}
    
    UC2M is related to various classical 2-median problems in 1D, where there is a cost that is a function of the \emph{distance} to the center \cite{HASSIN1991395,danny}. Those problems can be seen as a \emph{symmetric} unimodal cost $2$-median problem. A function $f:\R\to\R$ is \emph{symmetric}, if there exists some $c\in \R$, such that $f(c-x)=f(c+x)$ for all $x\in \R$. Previous techniques do not help with our problem, as our functions are not always symmetric. Hence, we have to design the algorithm from the ground up.
    
    For a piecewise-linear function $f$ of $n$ breakpoints, the representation consists of the sorted list of breakpoints $x_1,\ldots,x_n$, and the corresponding values $f(x_1),\ldots,f(x_n)$. Additionally, the initial slope and the final slope are also stored. Given $i$, one can find $x_i$, and evaluate $f$, the right slope of $f$, and the change of slope of $f$ at $x_i$, all in $O(1)$ time. If we are interested in finding the value of $f(x)$ by giving $x$ instead of any index, it takes $O(\log n)$ time by doing a binary search over the list and then interpolating adjacent breakpoints.

    \subsection{Properties of the problem}
    Let $f_1,\ldots,f_k$ be the input of \Cref{prob:uni}, and $g(x,y)=\sum_{i=1}^k \min\{f_i(x),f_i(y)\}$. Evaluate $g$ for all $x,y$ takes $O(k)$ time each if we look at $x,y$ in order. Hence, \Cref{prob:uni} has an $O(n^2k)$ time algorithm. 
    
    This algorithm is extremely naive, looking through all possible input pairs. One might guess that if we fix $a$, then the function $g_a(b) = g(a,b)$ is a unimodal function, and then a binary search-like procedure can be applied to find the minimum $b$ for $g_a$. Unfortunately, this is false; $ g_a$ can have many local minima. Fortunately, $g$ is a totally monotone function. 

    \begin{theorem}\label{thm:tm}
    Let $f:\R\to \R$ be an unimodal function. The function $g:\R^2\to \R$ defined as $g(x,y) = \min(f(x),f(y))$ if $x\leq y$, and $g(x,y)=\infty$ if $x>y$, is a totally monotone function.
    \end{theorem}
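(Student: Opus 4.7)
The plan is to verify the defining implication of total monotonicity for $g$ directly. Fix any $x_1\le x_2 \le y_1 \le y_2$; since $x_i \le y_j$ for every pair, all four values $g(x_i,y_j)$ lie in the finite branch and equal $\min(f(x_i),f(y_j))$, so the $\infty$ piece of the definition never interferes. I will work with the contrapositive of the required implication: $\min(f(x_2),f(y_1)) < \min(f(x_2),f(y_2))$ should force $\min(f(x_1),f(y_1)) < \min(f(x_1),f(y_2))$.

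The first step is an elementary algebraic identity: for arbitrary reals $t,u,v$,
\[
\min(t,u) < \min(t,v) \;\Longleftrightarrow\; u < t \text{ and } u < v,
\]
verified by a short case split on how $t$ compares to $u$ and $v$. Applying this with $t=f(x_2)$, resp.\ $t=f(x_1)$, the hypothesis becomes ``$f(y_1) < f(x_2)$ and $f(y_1) < f(y_2)$,'' and the conclusion becomes ``$f(y_1) < f(x_1)$ and $f(y_1) < f(y_2)$.'' The second clause is already part of the hypothesis, so the whole problem collapses to one scalar implication under $x_1\le x_2 \le y_1$ and the unimodality of $f$: if $f(x_2) > f(y_1)$, then $f(x_1) > f(y_1)$.

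The remaining step is a two-case analysis on the mode $m$ of $f$. If $m \le x_2$, then $m \le x_2 \le y_1$ lies on the non-decreasing branch of $f$, forcing $f(x_2)\le f(y_1)$ and contradicting the hypothesis, so this case is vacuous. If $x_2 < m$, then $x_1 \le x_2 < m$ sits on the non-increasing branch, giving $f(x_1)\ge f(x_2) > f(y_1)$ as required. I expect no substantive obstacle here; the only thing to be careful about is propagating the strict inequality $f(x_2) > f(y_1)$ through the non-strict monotonicity of each branch, which is exactly what the case split accomplishes.
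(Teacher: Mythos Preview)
Your proof is correct and follows essentially the same idea as the paper's: a short case analysis that exploits unimodality to pin down the ordering of the relevant $f$-values. The only difference is organizational---you argue the contrapositive and first reduce everything to the scalar implication $f(y_1)<f(x_2)\Rightarrow f(y_1)<f(x_1)$ via the identity $\min(t,u)<\min(t,v)\iff u<t\text{ and }u<v$, then split on the position of the mode relative to $x_2$; the paper argues the direct implication and splits instead on whether $f(y_1)\le f(y_2)$ or $f(y_1)>f(y_2)$, deriving a contradiction in the second case from $x_1\le x_2\le y_1$ lying on the decreasing branch. Your reduction via the $\min$ identity is arguably tidier, but the substance is identical.
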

    \begin{proof}
    Consider for any $x_1\leq x_2$ and $y_1\leq y_2$.
    
    In order to show $h$ is totally monotone, we have to show that if $\min(f(x_1), f(y_1)) \leq \min(f(x_1), f(y_2))$, then $\min(f(x_2), f(y_1)) \leq \min(f(x_2), f(y_2))$.

    If we do not have $x_1 \leq x_2 \leq y_1 \leq y_2$, we will obtain an infinity case, and one can see the inequalities hold.

    Hence we assume  $x_1 \leq x_2 \leq y_1 \leq y_2$.
    
    There are two cases.
    
    Case 1. If $f(y_1) \leq f(y_2)$, then $\min\{f(x_2), f(y_1)\} \le \min\{f(x_2), f(y_2)\}$.
        
    Case 2. Otherwise, assume $f(y_1) > f(y_2)$. Because $f(y_1)> f(y_2)$ but $y_1\leq y_2$, so we must have $y_1$ is in the decreasing part of the function $f$. Therefore, we must have $f(x_1) \ge f(x_2) \ge f(y_1) > f(y_2)$. Hence, $f(y_1) = \min\{f(x_1), f(y_1)\} \leq \min\{f(x_1), f(y_2)\} = f(y_2) < f(y_1)$, a contradiction.
    \end{proof}
    
    By \Cref{thm:tm}, and the fact that the sum of totally monotone functions is totally monotone \cite{Park99}, the function we try to optimize in \Cref{prob:uni} is a totally monotone function.
    Let $x_1,\ldots,x_n$ be all the breakpoints of all the functions ordered from smallest to largest. Consider the matrix $M$, such that $M_{i,j} = g(x_i,x_j)$, where $x_i$ is the $i$th breakpoint of $g$, then $M$ is a totally monotone matrix. It is useful for us to consider the index-based version of the breakpoint instead of the breakpoint itself for ease of implementation.

    For a totally monotone matrix $M$, the SMAWK algorithm finds the row minima of each row of $M$ in $O(n)$ evaluations of entries in $M$ \cite{SMAWK}. Each evaluation takes $O(k\log k)$ time. Therefore, we obtain an $O(nk\log k)$ time algorithm. The SMAWK algorithm is known to use a minimum number of evaluations (up to a constant), so it seems there is no way to beat the current running time by much, as it is unlikely to do a single evaluation in less than $O(k)$ time in the worst case. 

    However, observe that evaluations are not all independent. After evaluating $M_{i,j}$, evaluating $M_{i+1,j}$ or $M_{i,j+1}$ would become easier, as the difference is only a single breakpoint, so the change in the function $g$ is easy to describe. Hence, if one can arrange the order of evaluation and compute the "difference" with a better data structure, a fast algorithm can be designed. In the next section, we show this is possible by \emph{massively speeding up} the average time of each evaluation at the cost of \emph{slightly increasing} the number of evaluations.
    

    \subsection{Slowing down to speed up}

    Recall that we are interested in finding the $x$ and $y$ such that $g(x,y)$ is minimized, where $x, y$ are from a lattice grid and $g(x, y)$ evaluated on the grid results in a totally monotone matrix. 
    We can make the evaluation dependent on predecessors through an alternative divide-and-conquer algorithm other than the SMAWK. 
    This new divide-and-conquer algorithm will take a total of $O(n\log n)$ evaluations of the matrix, so a \emph{slow down} in the number of evaluations. However, a total \emph{speed up} is obtained by speeding up each individual evaluation.


    We outline the idea as follows: 
    for a fixed $i$, let $j$ be the value that minimizes $M_{i,j}$. The optimum of the entire matrix must be either $M_{i,j}$, or of the form $M_{i',j'}$ where $i'<i$ and $j'\leq j$, or $i'>i$ and $j'\geq j$ \cite{Park99}. Hence, this gives us a natural divide-and-conquer algorithm: find the row minimum of the center row and recursively solve the new problem on the two smaller matrices. Observe the total number of evaluations of a $n\times m$ matrix would be $T(n, m) = T(n/2, m_1) + T(n/2, m_2) + O(m) = O(m\log n)$. Since, in our case, $n=m$, we get an algorithm taking $O(n\log n)$ evaluations. 

    Naively, this would give us an $O(nk\log n\log k)$ time algorithm, which is even worse than the SMAWK algorithm. However, instead of the number of evaluations, we can show that the \emph{running time} follows a similar recursion, and thus we obtain an $O(n\log n + k\log k \log n)$ time algorithm. 

    Naturally, we have to describe how to solve the two parts of the problem: Finding the row minimum and divide-and-conquer. 

    \subsection{Find the minimum over a single row}\label{sec:singlerow}

    Finding a minimum of a given row in the matrix $M$, is equivalent to answer the following question: Given functions $f_1,\ldots,f_k$, and an fixed index $a$, how to find $\min_{b} \sum_{j=1}^k \min(f_j(x_a),f_j(x_b))$ quickly?

    For a fixed $a$, define the \emph{active set} at index $b$ to be the set of function indices $j$, such that $f_j(x_a)> f_j(x_b)$. Let $A_1,\ldots, A_n$ be the sequence of active sets at $1,\ldots,n$, respectively. Each function $f_j$ moves out of the active set only once. That is, for an index $j\in [k]$, there exists an index $q_j$, such that for each $i\geq q_j$, we have $j\in A_i$, and $j\not \in A_i$ otherwise. 
    
    Define $f_A = \sum_{i\in A} f_i$. If we can quickly evaluate $f_A$ and update $A$, then we can quickly evaluate $g$. Indeed, in order to evaluate $g(x_a, x_b)$, the idea is to break it down into evaluating $f_{\bar{A}}(x_a) + f_{A}(x_b)$ where $A$ is the active set at index $b$.
    
    It's not hard to describe a data structure that maintains the value of $f_A(x_a)$ under updates of $A$ and $a$; this is precisely the evaluation data structure in \Cref{thm:time}. However, we must also decide which function has to be added or removed from $A$. This is done through a useful transformation. Let $f$ be an unimodal function with the local minimum at $c$, define $f^{\dagger}:\R\to \R$ to be $f^{\dagger}(x) = \max \{x' \mid f(x')\leq f(x)\}$ if $x\in (-\infty,c]$ and $-\infty$ otherwise. Intuitively, this means for any $x\leq c$, if we have $x\leq x'\leq f^{\dagger}(x)$, then we know $f(x')\leq f(x)$. Namely, one can quickly observe if $f(x)\leq f(x')$ by checking if $x'\leq f^{\dagger}(x)$. See \Cref{fig:dagger}. 

    If $f$ is a piecewise-linear unimodal function of $n$ breakpoints, then $f^{\dagger}$ is a piecewise-linear decreasing function in $(-\infty,c]$ of $n$ breakpoints and can be computed from $f$ in $O(n)$ time. We can find the value of $f^{\dagger}(x)$ in $O(\log n)$ time. Since $f^{\dagger}$ can be computed in linear time when $f$ is created, we always assume that $f^{\dagger}$ is computed when we use $f$. 

    Knowing $f_i^{\dagger}(x_a)$ for each $i$, then we know precisely when $i$ moves out of the active set: $i$ moves out of the active set when $x_b>f_i^{\dagger}(x_a)$ for the first time. See \Cref{alg:recursion} for implementation of $\textsc{RowMinima}$.

    \begin{theorem}\label{thm:rowminimatime}
        If the input is $k$ functions with a total of $n$ breakpoints, row minima can be found in $O(n+k\log k)$ time.
    \end{theorem}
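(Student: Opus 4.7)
The plan is to bound the running time of $\textsc{RowMinima}$ from \Cref{alg:recursion}, which for a fixed index $a$ computes $\min_b \sum_j \min\{f_j(x_a), f_j(x_b)\}$, by $O(n + k\log k)$. I read ``row minima'' in the statement as referring to the minimum of the single row of $M$ under consideration --- this matches the subsection's explicit focus on one fixed $a$ and is the reading that makes the paper's later $O(n\log n + k\log k \log n)$ total time work out, since that bound is obtained by paying this per-row cost at each of the $O(\log n)$ levels of the outer divide-and-conquer.

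First, I would establish preprocessing. Using the precomputed $f_i^\dagger$, I evaluate $t_i := f_i^\dagger(x_a)$ for each $i$ via binary search on the breakpoints of $f_i^\dagger$; aggregating across functions gives $O(n + k \log k)$ since the total breakpoint count is $n$. I then sort the thresholds $t_1,\ldots,t_k$ in increasing order in $O(k\log k)$ to form the exit-event queue. The active set $A$ is initialized to include every function $i$ for which $x_a$ lies on the decreasing side of $f_i$, and the two running quantities $f_{\bar A}(x_a)$ (a scalar) and $f_A$ (maintained by the evaluation data structure of \Cref{thm:time}) together with the current slope of $f_A$ at $x_a$ are accumulated in $O(k\log k)$.

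Second, I would analyze the main sweep. Walk $b$ from $a$ upward through the global sorted breakpoints. Between consecutive breakpoints $f_A$ is linear, so $f_A(x_{b+1}) = f_A(x_b) + (x_{b+1}-x_b)\cdot\mathrm{slope}_A$ is an $O(1)$ update using the maintained slope; the running objective $f_{\bar A}(x_a) + f_A(x_b)$ and the running minimum can therefore be updated in $O(1)$ per breakpoint for $O(n)$ total. When the sweep pointer crosses the next queued threshold $t_i$, function $f_i$ is removed from $A$, which updates the value and slope of $f_A$ at the current point and the scalar $f_{\bar A}(x_a)$, at $O(\log k)$ per removal via the data structure of \Cref{thm:time}. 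Since each function is toggled at most $O(1)$ times, the aggregate update cost is $O(k\log k)$.

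Summing preprocessing, sorting, and sweeping gives $O(n + k\log k)$. The delicate step is the slope-invariant maintenance inside the sweep: when a removal event fires strictly between two consecutive global breakpoints, the reported slope of $f_A$ immediately after the removal must describe the correct linear piece extending to the next breakpoint. This hinges on $f_i$ itself being linear across the current inter-breakpoint interval, which it is because threshold events $t_i$ are themselves breakpoints of $f_i$, so the relevant piece of $f_A$ changes only by a constant-slope subtraction at that instant. Once this invariant is pinned down, the three cost components combine mechanically to the claimed $O(n + k\log k)$ bound.
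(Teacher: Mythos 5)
Your proposal is correct and follows essentially the same route as the paper's proof: evaluate $f_j^\dagger(x_a)$ for each function, sort the $k$ functions by these values in $O(k\log k)$, and perform a single linear scan over the $O(n)$ breakpoints while maintaining the active set and the running sums, for a total of $O(n+k\log k)$. The only cosmetic differences are that you charge $O(\log k)$ per removal where the paper's data structure (Theorem~\ref{thm:time}) gives amortized $O(1)$, and your side remark that the thresholds $t_i$ are breakpoints of $f_i$ is not needed (removals in \textsc{RowMinima} are triggered only at global breakpoints anyway); neither affects the bound.
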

    \begin{proof}
        We analyze the running time of $\textsc{RowMinima}$.
        Ordering the functions by evaluation of $f^\dagger_j(x_a)$ for each $j$, which takes $\sum_{j=1}^k n_j\log n_j = O(k\log \frac{n}{k})$ time, where $n_j$ is the number of breakpoints of $f_j$.
        Sorting the $k$ functions by their $\dagger$ value takes $O(k\log k)$ time.
        The linear scan takes $O(n)$ time. Hence, going through each breakpoint takes $O(n)$ time. Note $k = O(n)$, hence $O(k\log \frac{n}{k})=O(n)$. The total running time of RowMinima is $O(n+k\log k)$. 
    \end{proof}

    \begin{figure}[h]
    \centering
        \includegraphics[scale=0.4]{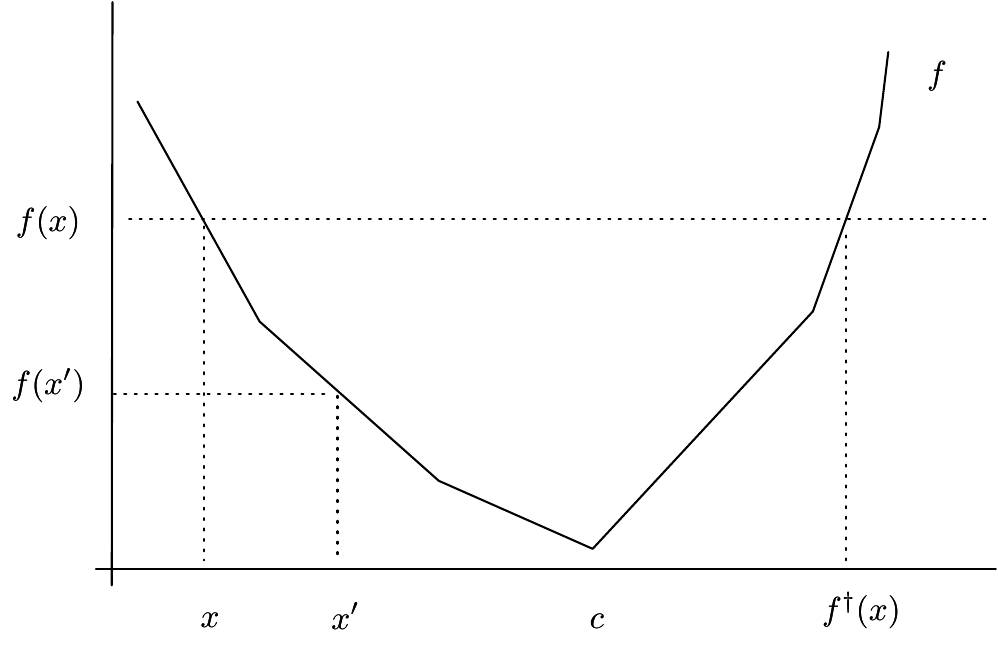}
        \caption{Intuition of the $\dagger$ transform.}
        \label{fig:dagger}
    \end{figure}

\begin{table*}[ht]
\centering
\caption{Comparison between LightGBM and our algorithm.}
\label{tab:experiment}
\resizebox{\textwidth}{!}{%
\begin{tabular}{l l r r r r r}
\hline
\textbf{Dataset} & \textbf{Feature} & \textbf{Data size} & \textbf{Categories} & \textbf{LightGBM relative Accuracy} & \textbf{LightGBM Time (s)} & \textbf{Our Time (s)} \\
\hline
Predict Droughts & TS & 19,300,680 & 7,588 & 0.9957 & 5.40698 & 0.948483 \\
                 & WS10M & 19,300,680 & 1,740 & 0.9991 & 5.64445 & 0.737965 \\
                 & QV2M & 19,300,680 & 2,210 & 0.9734 & 5.13713 & 0.843196 \\
                 & T2M\_RANGE & 19,300,680 & 3,029 & 0.9729 & 5.40553 & 0.847019 \\
delays\_zurich\_transport & windspeed\_avg & 5,465,575 & 66 & 0.9996 & 1.25839 & 0.283073 \\
                 & temp & 5,465,575 & 143 & 0.9984 & 1.26260 & 0.289960 \\
                 & stop\_id & 5,465,575 & 1,530 & 0.9766 & 1.29020 & 0.370960 \\
                 & time & 5,465,575 & 3,526 & 0.9870 & 1.23520 & 0.402581 \\
gpu\_kernel\_performance & MWG & 5,100,000 & 5 & 0.9234 & 2.25632 & 0.598805 \\
                 & MDIMC & 5,100,000 & 7 & 0.9903 & 2.33057 & 0.271983 \\
                 & NWG & 5,100,000 & 6 & 0.9583 & 2.36985 & 0.465714 \\
diamonds         & carat & 53,940 & 273 & 0.6193 & 0.021328 & 0.016229 \\
                 & table & 53,940 & 127 & 0.9829 & 0.022840 & 0.008226 \\
                 & x & 53,940 & 554 & 0.6213 & 0.028240 & 0.019635 \\
house\_sales     & sqft\_living & 21,613 & 1,038 & 0.8553 & 0.026370 & 0.009103 \\
                 & zipcode & 21,613 & 70 & 0.8322 & 0.016795 & 0.005777 \\
                 & sqft\_above & 21,613 & 946 & 0.8932 & 0.036526 & 0.008170 \\
wine             & fixed acidity & 1,143 & 91 & 0.8662 & 0.012946 & 0.000178 \\
                 & density & 1,143 & 388 & 0.7094 & 0.010698 & 0.000281 \\
                 & volatile acidity & 1,143 & 135 & 0.8067 & 0.005512 & 0.000193 \\
                 & citric acid & 1,143 & 77 & 0.8611 & 0.004860 & 0.000177 \\
boston           & ZN & 506 & 26 & 0.8926 & 0.005560 & 0.000262 \\
                 & INDUS & 506 & 76 & 0.7886 & 0.005856 & 0.000444 \\
                 & DIS & 506 & 412 & 0.7320 & 0.018420 & 0.000626 \\
\hline
\end{tabular}%
}
\end{table*}

    \subsection{Divide-and-conquer}\label{sec:singlerow}

    In the divide-and-conquer step, we split the problem into evaluations over $2$ smaller submatrices, which are almost disjoint.
    
    Observe that once we are searching for the optimum in a submatrix where the row index ranges from $a_{min}$ to $a_{max}$ and the column index ranges from $b_{min}$ to $b_{max}$. All the values of the functions outside this range are irrelevant. Hence, we can safely assume we process all the functions passed into the recursive call by removing the breakpoints outside the ranges. In practice, this is not explicit but is done through implicit bookkeeping. This allows us to bound the running time related to traversing the functions by $O((b_{max}-b_{min})+(a_{max}-a_{min}))$ inside each recursive call. See the recursive algorithm $\textsc{Optimum}$ in \Cref{alg:recursion}.

    The main observation is that the sequence of functions is also split into two subproblems. Let $M_{a,b}$ be the optimum value on the row $a$. We consider the functions in two classes, $L = \{i | f_i(x_a)\leq f_i(x_b)\}$, and $R = \{i | f_i(x_a)>f_i(x_b)\}$. The algorithm would be correct if we pass down all functions. However, when we pass the function to the left recursion, during the evaluating values $M_{a',b'}$ where $a'\in [a_{min}, a]$ and $b'\in [b_{\min},b]$, the contribution of the function $f_i$ where $i\in R$ is apparent: $\min(f_i(x_{a'}), f_i(x_{b'})) = f_i(x_{b'})$. Namely, $\sum_{i\in R} \min(f_i(x_{a'}),f_i(x_{b'})) = \sum_{i\in R} f_i(x_{b'})$. A similar result holds for right recursion. Therefore, there is no need to pass down all the functions; instead, we sum the functions and pass them down. This ensures the sequence of functions passed down is partitioned into two, each with one more function appended.

    \begin{figure}[ht]
    \centering
        \includegraphics[scale=0.11]{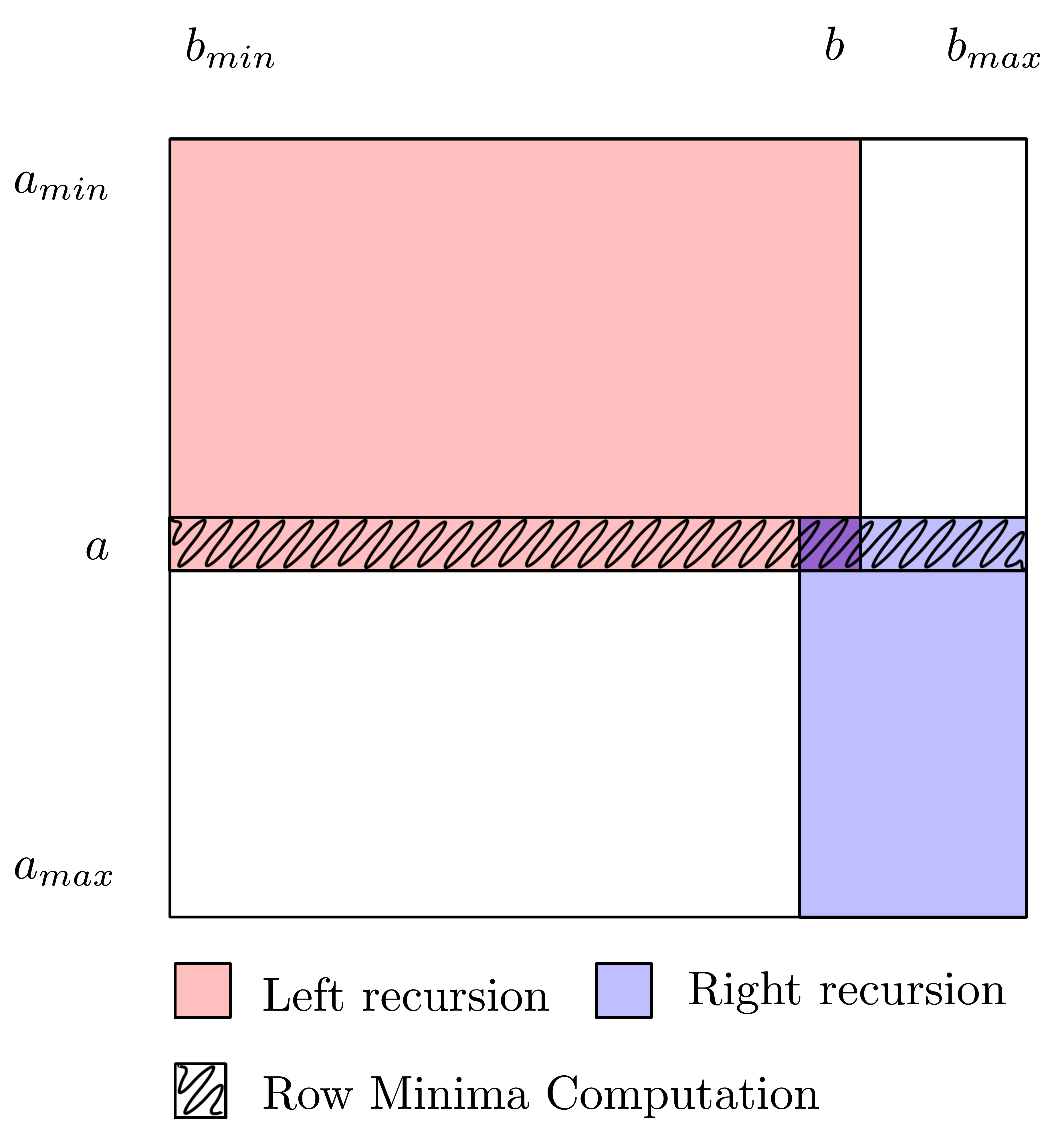}
        \caption{A demonstration of one step of the recursion algorithm.}
        \label{fig:recursion}
    \end{figure}

    \begin{figure}[ht]
        \centering
        \begin{algorithm}
            \textsc{Unimodal2Median}($f$):\+
            \\  Compute the global list of breakpoints $x_1,\ldots,x_n$
            \\  Compute the $\dagger$ transform for each function
            \\  return \textsc{Optimum}($1,n,1,n,f$)\-
            \\
            \\
            \textsc{Optimum}($a_{min},a_{max},b_{min},b_{max},f$):\+
            \\  $a\gets (a_{max}+a_{min})/2$
            \\  $v, b\gets \textsc{RowMinima}(a, b_{min}, b_{max}, f)$
            \\  if $a_{min}=a_{max}$:\+
            \\     return $v$\-
            \\  $L\gets \{i | f_i(x_a) \leq f_i(x_b)\}$
            \\  $R\gets \{i | f_i(x_a) > f_i(x_b)\}$
            \\  $f^L\gets \{f_i|i\in L\} \cup (\sum_{i\in R} f_i)$
            \\  $f^R\gets \{f_i|i\in R\}  \cup (\sum_{i\in L} f_i)$
            \\  $v_L \gets$ \textsc{Optimum}($a_{min},a,b_{min},b,f^L$)
            \\  $v_R \gets$ \textsc{Optimum}($a,a_{max},b,b_{max},f^R$)
            \\  return $\min(v_L,v_R,v)$\-
            \\
            \\
            \textsc{RowMinima}($a,b_{min},b_{max},f$):\+
            \\  $f_1,\ldots,f_k$ are renumbered such that $f^\dagger_j(x_a)\leq f^\dagger_{j+1}(x_a)$
            \\  $p\gets 1$
            \\  $A\gets \{1,\ldots,k\}$
            \\  for $i$ from $b_{min}$ to $b_{max}$\+
            \\      while $f^\dagger_p(x_a)<x_i$\+
            \\         $A\gets A\setminus \{p\}$
            \\         $p\gets p+1$\-
            \\      $v\gets f_{\bar{A}}(x_a) + f_{A}(x_i)$
            \\      if $v$ is smallest seen value\+
            \\         $b\gets i$\-\-
            \\  return $b$\-
        \end{algorithm}
        \caption{Algorithm for finding unimodal 2 medians. For each one of the procedures, the input $f$ is a sequence of piecewise-linear unimodal functions.}
        \label{alg:recursion}
    \end{figure}

   \begin{figure}[h]
    \centering
        \includegraphics[width=\linewidth]{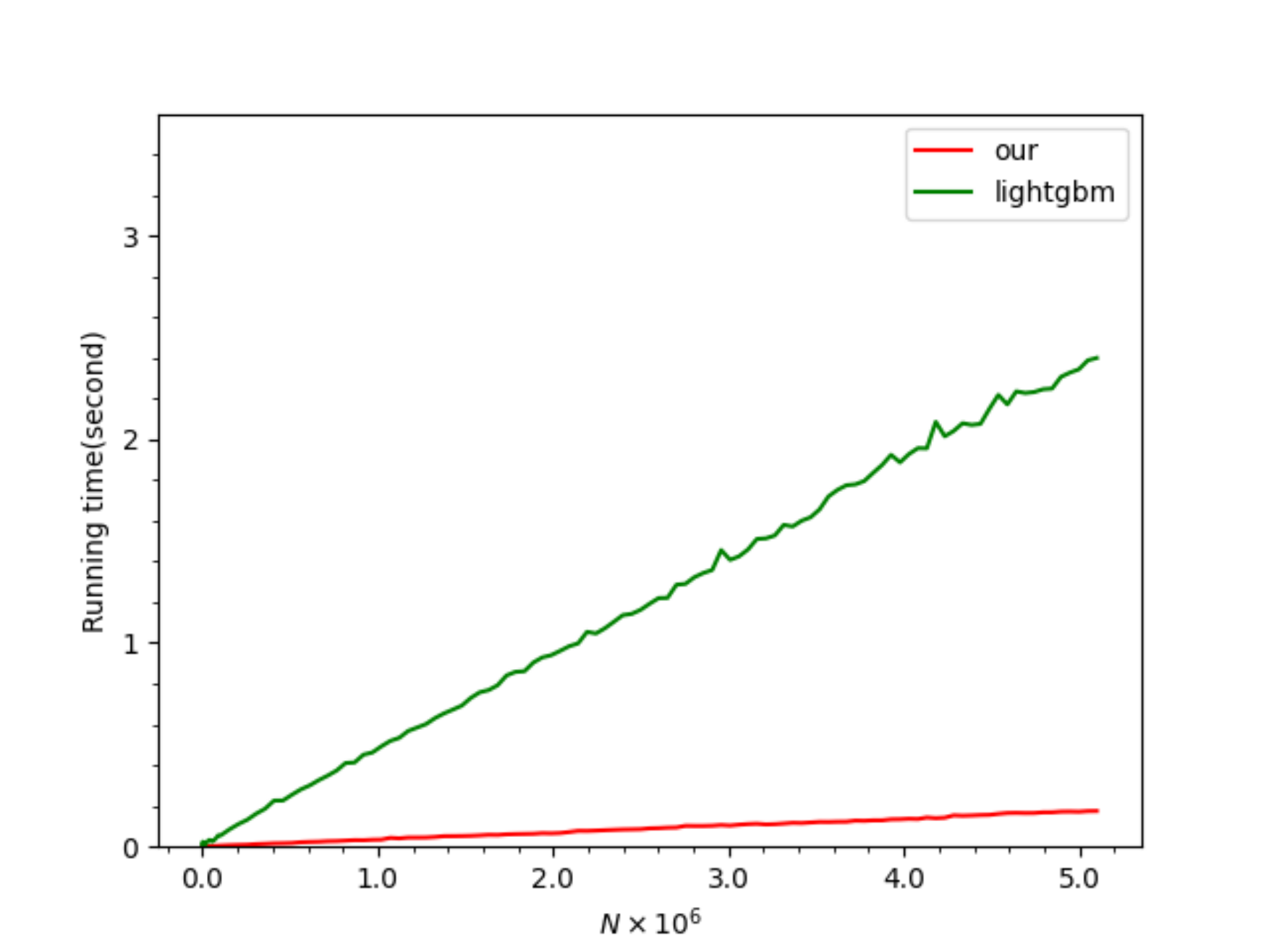}
        \caption{Running time in seconds vs number of data points}
        \label{fig:result}
    \end{figure}
    
    \begin{theorem}
        Finding the optimum of \Cref{prob:uni} for the input of $k$ function of a total $n$ breakpoints takes $O((n+k\log k)\log n)$ time.
    \end{theorem}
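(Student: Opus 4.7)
My plan is to bound the running time by unrolling a recurrence of the form
\[
T(n, k) \;=\; T(n_1, k_1) + T(n_2, k_2) + O(n + k \log k),
\]
where $n_1 + n_2 \le n + O(1)$ and $k_1 + k_2 = k + 2$. Before the timing analysis I would verify correctness in two pieces. First, the outer divide-and-conquer over rows is valid because \Cref{thm:tm} gives total monotonicity of the $(a,b)$-matrix, so the column index of each row's minimum is non-decreasing in the row index, and the ``center row plus two half-matrices'' split is lossless. Second, for the function-split step I would check the invariant that in the left subproblem every function placed in $R$ contributes exactly $f_i(x_{b'})$ (never $f_i(x_{a'})$) for all $(a',b')$ with $a'\le a$ and $b' \le b$: since $f_i(x_a) > f_i(x_b)$ together with $x_a \le x_b$ forces the minimum of $f_i$ to lie at or right of $x_a$, unimodality makes $f_i$ non-increasing across the relevant column range, so the single summed function $\sum_{i\in R} f_i$ captures their total contribution. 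A symmetric argument handles the right subproblem.

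Next, I would pin down the two cost parameters. The recursion depth is $O(\log n)$ because $a_{\max} - a_{\min}$ halves at every call. Each call does $O(n_i + k_i \log k_i)$ work for \textsc{RowMinima} by \Cref{thm:rowminimatime}, plus $O(n_i)$ to build $f^L$ and $f^R$ by a scan-and-merge over the already-sorted breakpoints of the constituent functions. Here $n_i$ counts breakpoints in the active column range, with out-of-range breakpoints absorbed by implicit bookkeeping so that restricting a function to the range is essentially free.

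The hard part will be the cross-level summation, because the function count grows with depth due to the newly introduced summed functions. At depth $d$ there are at most $2^d$ subproblems whose column ranges partition $[1,n]$ with $O(1)$ overlap each, so $\sum_i n_i = O(n)$. Each split injects one new function into each child, hence $K_d := \sum_i k_i \le k + 2(2^d - 1)$. Using the elementary bound $\sum_i k_i \log k_i \le K_d \log K_d$, the per-level cost is $O\bigl(n + (k + 2^d)\log(k + 2^d)\bigr)$. Summing over $d \in \{0,\ldots,\log n\}$ and splitting at $d^\ast = \log k$, the levels with $d \le d^\ast$ contribute $O(k \log^2 k)$ while those with $d > d^\ast$ contribute $O(n \log n)$; the $n$-term itself contributes $O(n \log n)$. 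Combining and using $k \le n$ to absorb $k\log^2 k$ into $k\log k \log n$ yields the claimed total runtime $O((n + k \log k)\log n)$.
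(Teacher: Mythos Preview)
Your running-time analysis is correct and matches the paper's approach: recursion depth $O(\log n)$ from halving the row range, per-level $n$-cost $O(n)$ because the column (and row) ranges partition $[1,n]$, and per-level $k$-cost bounded by $K_d\log K_d$ with $K_d\le k+O(2^d)$, which sums to $O((n+k\log k)\log n)$ after splitting at $d^\ast=\log k$.

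One point in your correctness sketch needs repair. The claim that for $i\in R$ the function $f_i$ is ``non-increasing across the relevant column range'' is false in general: if the minimizer $c_i$ of $f_i$ lies strictly between $x_a$ and $x_b$ (which is consistent with $f_i(x_a)>f_i(x_b)$), then $f_i$ increases on $[c_i,x_b]\subseteq[x_{b_{\min}},x_b]$. What you actually need, and what does hold, is the pointwise inequality $f_i(x_{a'})\ge f_i(x_{b'})$ for every admissible $(a',b')$ with $a'\le a$, $b'\le b$, $a'\le b'$. This follows from $c_i\ge x_a$ together with $f_i(x_b)<f_i(x_a)$ by a short case split on whether $x_{b'}\le c_i$ (then $x_{a'}\le x_{b'}\le c_i$ and monotonicity gives it) or $x_{b'}>c_i$ (then $f_i(x_{b'})\le f_i(x_b)<f_i(x_a)\le f_i(x_{a'})$). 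Summing over $i\in R$ yields the same inequality for $h=\sum_{i\in R}f_i$, so $h$ is always in the active set within the left subproblem---which is the property \textsc{RowMinima} actually needs, even though $h$ need not be unimodal. The symmetric fix applies to the right subproblem.
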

    \begin{proof}
        See appendix.
    \end{proof}

    Some additional optimization can be done that does not change the asymptotic worst-case running time but improves the running time in practice. For example, in the recursion, if at some point, the only function remaining is the function that came from a sum of original functions, then the recursion can stop earlier.

    \subsection{Data structure for piecewise-linear functions} \label{sec:evaluateDS}
    We describe a data structure over a set of piecewise-linear functions, and it can return the sum quickly. This data structure is required to implement $\textsc{RowMinima}$, where two dynamic sums of piecewise-linear functions need to be maintained, and also $\textsc{Optimum}$, where the sum of piecewise-linear functions has to be computed and passed down.

    Let $f_1,\ldots,f_k$ be piecewise-linear functions with a total of $n$ distinct breakpoints. Although our algorithms can handle the case when breakpoints are not distinct, describing them does not provide additional insights.
    
    Let $f_A = \sum_{i\in A} f_i$. Let a global set of points $x_1,\ldots,x_n$ contain all the breakpoints of each $f_i$. We aim to maintain a data structure that includes a set $A$ and an index $a$, enabling the fast evaluation of $f_A(x_a)$. 
    
    Formally, the data structure should have the following operations. 
    \begin{enumerate}
    \item \textsc{Initialize($f_1,\ldots,f_k$)}: Process the functions $f_1,\ldots,f_k$, and return a data structure for $f_{\emptyset}$ and $a=-1$.
    \item \textsc{Add}($i$): Update $A$ into $A\cup \{i\}$.
    \item \textsc{Remove}($i$): Update $A$ into $A\setminus \{i\}$.
    \item \textsc{Evaluate}(): Return $f_A(x_a)$.
    \item \textsc{Next}(): Update $a$ to $a+1$.
    \end{enumerate}
    
    Such a data structure is standard, but we sketch it here for completeness.
    
    \begin{theorem}\label{thm:time}
    Assuming all the breakpoints have been sorted, the data structure takes $O(n)$ time to construct, and any sequence of $O(n)$ queries takes $O(n)$ time.
    \end{theorem}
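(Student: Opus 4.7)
The plan is to maintain two running quantities summarizing the current state: $v_A = f_A(x_a)$ and $s_A$, the slope of $f_A$ just to the right of $x_a$. We also store, for each function $f_i$, a Boolean flag for membership in $A$ and a pointer $p_i$ to the segment of $f_i$ currently containing $x_a$. With these in hand, \textsc{Evaluate} simply returns $v_A$ in $O(1)$.

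For \textsc{Next}, first update $v_A \gets v_A + s_A \cdot (x_{a+1} - x_a)$, then increment $a$. Because the $n$ breakpoints are assumed distinct, the new point $x_{a+1}$ is a breakpoint of at most one function, which we call its \emph{owner} $i^{\ast}$; hence $s_A$ changes only by the slope-jump of $f_{i^{\ast}}$ at $x_{a+1}$, and only when $i^{\ast} \in A$. We also advance $p_{i^{\ast}}$ by one segment, regardless of membership. All of this is $O(1)$ per call, using an owner map built once at initialization.

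For \textsc{Add}$(i)$ and \textsc{Remove}$(i)$, the contribution to $v_A$ is $\pm f_i(x_a)$ and to $s_A$ is $\pm s_i(x_a^{+})$. Given $p_i$, both are $O(1)$ lookups: the slope is stored with segment $p_i$, and the value is $f_i(b) + s_i(x_a^{+})(x_a - b)$, where $b$ is the precomputed left endpoint of segment $p_i$. The main obstacle, and the reason for tracking $p_i$ for \emph{every} $i$ rather than only those currently in $A$, is ensuring that $p_i$ remains correct after a long run of \textsc{Next} calls during which $i$ was never in $A$; otherwise a later \textsc{Add}$(i)$ would need a binary search into $f_i$'s breakpoints. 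This is exactly what the distinct-breakpoints assumption buys us: each \textsc{Next} crosses exactly one global breakpoint and therefore can move at most one pointer, identified in $O(1)$ via the owner map, so the invariant is preserved without inspecting any other function.

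\textsc{Initialize} processes each $f_j$ into its sorted breakpoint list with values and slopes in $O(n_j)$ time, builds the owner map by a single pass over the already-sorted global breakpoints in $O(n)$, and sets $v_A \gets 0$, $s_A \gets 0$, $A \gets \emptyset$, and each $p_i$ to the leftmost segment of $f_i$. Since every subsequent operation runs in $O(1)$ worst case, any sequence of $O(n)$ queries costs $O(n)$ on top of the $O(n)$ construction, as claimed.
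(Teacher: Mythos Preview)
Your proof is correct and matches the standard implementation the paper alludes to: maintain the running value $v_A=f_A(x_a)$ and right-slope $s_A$, use an owner map so that each \textsc{Next} touches only the single function whose breakpoint is crossed, and keep the segment pointer $p_i$ current for \emph{every} $i$ so that \textsc{Add}/\textsc{Remove} never require a search. All five operations are then $O(1)$ worst case, giving the claimed $O(n)$ construction and $O(n)$ total query time; this is essentially the paper's approach.
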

    \begin{proof}
    See appendix.
    \end{proof}

    \begin{theorem}
    \Cref{prob:uni} can be solved in $O((n+k\log k)\log n)$ time.
    \end{theorem}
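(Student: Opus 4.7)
The plan is to charge the total work of \textsc{Unimodal2Median} level-by-level over the recursion tree of \textsc{Optimum}, using the earlier bound $O(n+k\log k)$ for a single call to \textsc{RowMinima} (\Cref{thm:rowminimatime}) together with the evaluation data structure of \Cref{thm:time}. The preprocessing in \textsc{Unimodal2Median}---sorting the global list of breakpoints and building each $f^\dagger_i$---takes $O(n)$ time, which is absorbed by the final bound.

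For a generic recursive call on a subproblem with $k'$ active functions and $n'$ breakpoints inside its row/column range, I would first show the local work (excluding recursive calls) is $O(n'+k'\log k')$. The midpoint row-minimum costs $O(n'+k'\log k')$ by \Cref{thm:rowminimatime}; partitioning the function set into $L,R$ according to the sign of $f_i(x_a)-f_i(x_b)$ is $O(k')$; and materializing the two sum-functions $\sum_{i\in R}f_i$ and $\sum_{i\in L}f_i$---restricted to the respective child ranges---takes $O(n')$ via \textsc{Next}, \textsc{Add}, \textsc{Remove}, and \textsc{Evaluate}, each amortized $O(1)$ by \Cref{thm:time}.

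I would then analyze the recursion tree. Since each call bisects its row range, the depth is $O(\log n)$. Two invariants drive the level-wise accounting: (i) the row and column ranges of sibling subproblems are disjoint, so the original breakpoints inside the ranges sum to at most $n$ across all level-$d$ subproblems, and similarly the extra breakpoints carried by newly introduced sum-functions contribute only $O(n)$ in total per level because they are trimmed to their child range; (ii) each split injects exactly one sum-function into each child, so the total function count at level $d$ is $k+O(2^d)$. Summing $O(n'+k'\log k')$ over all level-$d$ subproblems therefore yields $O\bigl(n+(k+2^d)\log(k+2^d)\bigr)$, and accumulating over $d=0,\ldots,O(\log n)$ gives $O(n\log n+k\log k\log n)=O((n+k\log k)\log n)$.

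The main obstacle I anticipate is invariant (i): the sum-functions created during the recursion could in principle carry many breakpoints, and one must verify that the implicit trimming described in the algorithm really restricts each subproblem's breakpoint work to its own disjoint sub-range, so that the per-level breakpoint budget remains $O(n)$ rather than blowing up by a factor of $2^d$. Once that is carefully justified (most cleanly by phrasing the trimming as a preorder traversal of the breakpoint list shared with the evaluation data structure), the level-by-level sum telescopes cleanly and delivers the claimed running time.
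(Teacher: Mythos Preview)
Your level-by-level charging over the recursion tree of \textsc{Optimum}, invoking \Cref{thm:rowminimatime} for each call and exploiting the disjointness of the row and column ranges across siblings, is precisely the paper's analysis (the full details are deferred to its appendix, but the main text already bounds the per-call traversal by $(a_{\max}-a_{\min})+(b_{\max}-b_{\min})$ and sums over the $O(\log n)$ levels). One small slip: sorting the global breakpoint list is $O(n\log n)$ rather than $O(n)$, though as you note it is absorbed; and your flagged obstacle about sum-function breakpoints dissolves exactly as you suggest once the cost is measured per \emph{index} in the shared global breakpoint list rather than per function, which is how the data structure of \Cref{thm:time} is set up.
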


    \begin{theorem}
    The median split problem on $k$ categories and $n$ data points can be solved in $O((n+k\log k)\log n)$ time.
    \end{theorem}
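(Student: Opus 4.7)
The plan is to obtain this theorem as a direct chaining of the three earlier results: the equivalence lemma (\Cref{prop:transform-equivalence}), the linear-time reduction from the median split problem to UC2M, and the UC2M algorithm's $O((n+k\log k)\log n)$ running time. The only genuinely new content is bookkeeping of the preprocessing cost needed to convert the raw input (a list of $n$ pairs $(r,c)$) into the $k$ piecewise-linear unimodal cost functions consumed by UC2M.

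First I would invoke \Cref{prop:transform-equivalence} to rewrite the MAE split objective as $\min_{a,b\in\R}\sum_{S\in\mathcal{Y}}\min(\sum_{i\in S}|i-a|,\sum_{j\in S}|j-b|)$, which is exactly the median split problem (\Cref{prob:ms}). Next I would apply the reduction theorem: setting $f_S(x)=\sum_{i\in S}|i-x|$ for each category $S\in\mathcal{Y}$ produces an instance of UC2M whose optimum equals the MAE split value. Each $f_S$ is piecewise-linear (breakpoints at the elements of $S$) and unimodal (the minimum is attained at $\median(S)$), as required by \Cref{prob:uni}. The total number of breakpoints across all $f_S$ is exactly $n$, and the number of functions is exactly $k$, so the parameters $(n,k)$ of the resulting UC2M instance match those of the original problem.

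The preprocessing step requires care. For each category $c$, we must build the compact representation of $f_c$ (sorted breakpoints, function values, initial/final slopes, and the $\dagger$-transform). Sorting the $n_c$ data points of category $c$ costs $O(n_c\log n_c)$, and computing the cumulative values and slopes is linear in $n_c$; the $\dagger$-transform is then obtained in $O(n_c)$ time as noted after \Cref{fig:dagger}. Summing over categories gives $\sum_c n_c\log n_c \le n\log n$ (since $\sum_c n_c = n$), so preprocessing is $O(n\log n)$ overall. Feeding the resulting functions into \textsc{Unimodal2Median} solves UC2M in $O((n+k\log k)\log n)$ by the previous theorem, and the total cost is $O(n\log n)+O((n+k\log k)\log n)=O((n+k\log k)\log n)$.

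The main obstacle I anticipate is not algorithmic but notational: the preceding results assume the input is already presented as piecewise-linear unimodal functions, whereas the MAE split problem is stated in terms of raw categorical data. The subtlety is verifying that the preprocessing cost does not dominate the stated bound for the relevant regime $k\le n$, and that the piecewise-linear representation produced by \Cref{alg:reduction} genuinely satisfies the interface assumed in \Cref{sec:singlerow} (constant-time access to slopes and $O(\log n)$-time evaluation). Once this bookkeeping is settled, the theorem follows immediately from the chain above.
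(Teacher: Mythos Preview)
Your proposal is correct and follows essentially the same route as the paper: chain the linear-time reduction from \Cref{prob:ms} to \Cref{prob:uni} with the $O((n+k\log k)\log n)$ bound for UC2M, plus the preprocessing bookkeeping (the paper in fact states this theorem without a separate proof, treating it as immediate from the preceding results). One small over-inclusion: the theorem is stated for the \emph{median split problem} (\Cref{prob:ms}), whose input is already the family $\mathcal{Y}$, so invoking \Cref{prop:transform-equivalence} is not strictly needed here---that lemma connects the MAE split problem to \Cref{prob:ms}, which is one step upstream of the claim being proved.
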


    In particular, if the number of categories is small with respect to the number of datapoints, then our running time is simply $O(n\log n)$.

\section{Experiments}

We implemented the algorithm in C++ and ran it on an 8-core AMD Ryzen 7 5800H processor with 16GB of RAM.\footnote{Code is available at \url{https://anonymous.4open.science/r/binary-split-F12B}.}
We selected several regression datasets from OpenML: two large datasets (more than 500k data points) \cite{openml_dataset_40753,openml_dataset_45662}, two medium-sized datasets (around 50k data points) \cite{openml_dataset_42225,openml_dataset_42731}, and two small datasets (about 1000 data points) \cite{winequality_2022,openml_dataset_531}. For each dataset, we attempted a binary split on each of its features. We used LightGBM, as it can handle large-scale data inputs, whereas scikit-learn often fails to complete in a reasonable amount of time.

We recorded the running time and MAE values of our algorithm and LightGBM, and calculated the relative accuracy (our result / LightGBM result) for each dataset, as shown in the table above (OpenML dataset IDs in parentheses).

To further evaluate performance under imbalanced data, we selected two datasets from Kaggle: Predict Droughts\cite{usdrought_2021} and Wine. Their results are also included in the table.

Our algorithm achieves exact MAE-optimal splits in all cases. Interestingly, LightGBM is sufficiently accurate in most instances. However, since our algorithm is consistently faster, there is no need to use subsampled data to reduce computation. For the imbalanced datasets, our method maintains higher accuracy than LightGBM and also demonstrates superior time efficiency. Furthermore, since our evaluation focuses only on single-level binary splits, we have reason to believe that, in multi-level decision trees, the cumulative error from heuristic methods like LightGBM may grow significantly as the depth increases.

We also examined running time asymptotics by creating subsets of the \texttt{gpu\_performance} dataset with feature \texttt{MDIMC} via sampling without replacement from 0.01 to 1.0 in 0.01 increments. We ran both our algorithm and LightGBM on each subset and compared the runtime. The result in \Cref{fig:result} confirms our method scales nearly linearly with input size.

\clearpage
\bibliography{main}

\makeatletter
\@ifundefined{isChecklistMainFile}{
  \newif\ifreproStandalone
  \reproStandalonetrue
}{
  \newif\ifreproStandalone
  \reproStandalonefalse
}
\makeatother

\ifreproStandalone
\documentclass[letterpaper]{article}
\usepackage[submission]{aaai2026}
\setlength{\pdfpagewidth}{8.5in}
\setlength{\pdfpageheight}{11in}
\usepackage{times}
\usepackage{helvet}
\usepackage{courier}
\usepackage{xcolor}
\frenchspacing

\begin{document}
\fi
\setlength{\leftmargini}{20pt}
\makeatletter\def\@listi{\leftmargin\leftmargini \topsep .5em \parsep .5em \itemsep .5em}
\def\@listii{\leftmargin\leftmarginii \labelwidth\leftmarginii \advance\labelwidth-\labelsep \topsep .4em \parsep .4em \itemsep .4em}
\def\@listiii{\leftmargin\leftmarginiii \labelwidth\leftmarginiii \advance\labelwidth-\labelsep \topsep .4em \parsep .4em \itemsep .4em}\makeatother

\setcounter{secnumdepth}{0}
\renewcommand\thesubsection{\arabic{subsection}}
\renewcommand\labelenumi{\thesubsection.\arabic{enumi}}

\newcounter{checksubsection}
\newcounter{checkitem}[checksubsection]

\newcommand{\checksubsection}[1]{%
  \refstepcounter{checksubsection}%
  \paragraph{\arabic{checksubsection}. #1}%
  \setcounter{checkitem}{0}%
}

\newcommand{\checkitem}{%
  \refstepcounter{checkitem}%
  \item[\arabic{checksubsection}.\arabic{checkitem}.]%
}
\newcommand{\question}[2]{\normalcolor\checkitem #1 #2 \color{blue}}
\newcommand{\ifyespoints}[1]{\makebox[0pt][l]{\hspace{-15pt}\normalcolor #1}}

\section*{Reproducibility Checklist}

\vspace{1em}
\hrule
\vspace{1em}

\textbf{Instructions for Authors:}

This document outlines key aspects for assessing reproducibility. Please provide your input by editing this \texttt{.tex} file directly.

For each question (that applies), replace the ``Type your response here'' text with your answer.

\vspace{1em}
\noindent
\textbf{Example:} If a question appears as
\begin{center}
\noindent
\begin{minipage}{.9\linewidth}
\ttfamily\raggedright
\string\question \{Proofs of all novel claims are included\} \{(yes/partial/no)\} \\
Type your response here
\end{minipage}
\end{center}
you would change it to:
\begin{center}
\noindent
\begin{minipage}{.9\linewidth}
\ttfamily\raggedright
\string\question \{Proofs of all novel claims are included\} \{(yes/partial/no)\} \\
yes
\end{minipage}
\end{center}
Please make sure to:
\begin{itemize}\setlength{\itemsep}{.1em}
\item Replace ONLY the ``Type your response here'' text and nothing else.
\item Use one of the options listed for that question (e.g., \textbf{yes}, \textbf{no}, \textbf{partial}, or \textbf{NA}).
\item \textbf{Not} modify any other part of the \texttt{\string\question} command or any other lines in this document.\\
\end{itemize}

You can \texttt{\string\input} this .tex file right before \texttt{\string\end\{document\}} of your main file or compile it as a stand-alone document. Check the instructions on your conference's website to see if you will be asked to provide this checklist with your paper or separately.

\vspace{1em}
\hrule
\vspace{1em}


\checksubsection{General Paper Structure}
\begin{itemize}

\question{Includes a conceptual outline and/or pseudocode description of AI methods introduced}{(yes/partial/no/NA)}
yes

\question{Clearly delineates statements that are opinions, hypothesis, and speculation from objective facts and results}{(yes/no)}
yes

\question{Provides well-marked pedagogical references for less-familiar readers to gain background necessary to replicate the paper}{(yes/no)}
yes

\end{itemize}
\checksubsection{Theoretical Contributions}
\begin{itemize}

\question{Does this paper make theoretical contributions?}{(yes/no)}
yes

	\ifyespoints{\vspace{1.2em}If yes, please address the following points:}
        \begin{itemize}
	
	\question{All assumptions and restrictions are stated clearly and formally}{(yes/partial/no)}
	yes

	\question{All novel claims are stated formally (e.g., in theorem statements)}{(yes/partial/no)}
	yes

	\question{Proofs of all novel claims are included}{(yes/partial/no)}
	yes

	\question{Proof sketches or intuitions are given for complex and/or novel results}{(yes/partial/no)}
	yes

	\question{Appropriate citations to theoretical tools used are given}{(yes/partial/no)}
	yes

	\question{All theoretical claims are demonstrated empirically to hold}{(yes/partial/no/NA)}
	partial

	\question{All experimental code used to eliminate or disprove claims is included}{(yes/no/NA)}
	NA
	
	\end{itemize}
\end{itemize}

\checksubsection{Dataset Usage}
\begin{itemize}

\question{Does this paper rely on one or more datasets?}{(yes/no)}
no

\ifyespoints{If yes, please address the following points:}
\begin{itemize}

	\question{A motivation is given for why the experiments are conducted on the selected datasets}{(yes/partial/no/NA)}
	Type your response here

	\question{All novel datasets introduced in this paper are included in a data appendix}{(yes/partial/no/NA)}
	Type your response here

	\question{All novel datasets introduced in this paper will be made publicly available upon publication of the paper with a license that allows free usage for research purposes}{(yes/partial/no/NA)}
	Type your response here

	\question{All datasets drawn from the existing literature (potentially including authors' own previously published work) are accompanied by appropriate citations}{(yes/no/NA)}
	Type your response here

	\question{All datasets drawn from the existing literature (potentially including authors' own previously published work) are publicly available}{(yes/partial/no/NA)}
	Type your response here

	\question{All datasets that are not publicly available are described in detail, with explanation why publicly available alternatives are not scientifically satisficing}{(yes/partial/no/NA)}
	Type your response here

\end{itemize}
\end{itemize}

\checksubsection{Computational Experiments}
\begin{itemize}

\question{Does this paper include computational experiments?}{(yes/no)}
yes

\ifyespoints{If yes, please address the following points:}
\begin{itemize}

	\question{This paper states the number and range of values tried per (hyper-) parameter during development of the paper, along with the criterion used for selecting the final parameter setting}{(yes/partial/no/NA)}
	NA

	\question{Any code required for pre-processing data is included in the appendix}{(yes/partial/no)}
	yes

	\question{All source code required for conducting and analyzing the experiments is included in a code appendix}{(yes/partial/no)}
	yes

	\question{All source code required for conducting and analyzing the experiments will be made publicly available upon publication of the paper with a license that allows free usage for research purposes}{(yes/partial/no)}
	yes
        
	\question{All source code implementing new methods have comments detailing the implementation, with references to the paper where each step comes from}{(yes/partial/no)}
	partial

	\question{If an algorithm depends on randomness, then the method used for setting seeds is described in a way sufficient to allow replication of results}{(yes/partial/no/NA)}
	NA

	\question{This paper specifies the computing infrastructure used for running experiments (hardware and software), including GPU/CPU models; amount of memory; operating system; names and versions of relevant software libraries and frameworks}{(yes/partial/no)}
	yes

	\question{This paper formally describes evaluation metrics used and explains the motivation for choosing these metrics}{(yes/partial/no)}
	yes

	\question{This paper states the number of algorithm runs used to compute each reported result}{(yes/no)}
	yes

	\question{Analysis of experiments goes beyond single-dimensional summaries of performance (e.g., average; median) to include measures of variation, confidence, or other distributional information}{(yes/no)}
	no

	\question{The significance of any improvement or decrease in performance is judged using appropriate statistical tests (e.g., Wilcoxon signed-rank)}{(yes/partial/no)}
	no

	\question{This paper lists all final (hyper-)parameters used for each model/algorithm in the paper’s experiments}{(yes/partial/no/NA)}
	NA

\end{itemize}
\end{itemize}
\ifreproStandalone
\end{document}
\fi

\end{document}